\documentclass[12pt, final]{l4dc2022}

\title[Online Learning of Control Barrier Conditions for Safety-Critical Control]{Barrier Bayesian Linear Regression: Online Learning of Control Barrier Conditions  for Safety-Critical Control of Uncertain Systems}
\usepackage{times}

\author{%
 \Name{Lukas Brunke}$^{1, 3, 4*}$ \Email{lukas.brunke@robotics.utias.utoronto.ca}\\[-1em]
 \AND
 \Name{Siqi Zhou}$^{1, 3, 4*}$
 \Email{siqi.zhou@robotics.utias.utoronto.ca}\\[-1em]
 \AND
 \Name{Angela P. Schoellig}$^{1, 2, 3, 4}$ \Email{angela.schoellig@utoronto.ca}\\[1em]
$^{1}$\addr University of Toronto Institute for Aerospace Studies, Toronto, Ontario, Canada\\
$^{2}$\addr Technical University of Munich, Munich, Germany\\
$^{3}$\addr University of Toronto Robotics Institute, Toronto, Ontario, Canada\\
$^{4}$\addr Vector Institute for Artificial Intelligence, Toronto, Ontario, Canada
\\
$^{*}$Equal contribution%
}

\usepackage{amssymb}
\usepackage{amsmath}
\usepackage{siunitx}
\usepackage{graphicx}
\usepackage{tikz}
\usepackage{pgfplots}
\pgfplotsset{compat=1.16}

\usepackage{booktabs}
\usepackage{multirow}
\usepackage{color, colortbl}
\usepackage{hyperref}

\allowdisplaybreaks

\usetikzlibrary{plotmarks}
\usetikzlibrary{arrows.meta}
\usepgfplotslibrary{patchplots}
\usepackage{grffile}

\renewcommand\vec{\mathbf} 
\newcommand{\R}{\mathbb{R}}

\newcommand{\N}{\mathbb{N}}

\newcommand{\f}{\vec{f}}
\newcommand{\fhat}{\hat{\f}}
\newcommand{\g}{\vec{g}}
\newcommand{\ghat}{\hat{\g}}

\newcommand{\x}{\vec{x}}
\newcommand{\xdot}{\dot{\x}}
\newcommand{\xhat}{\hat{\x}}
\newcommand{\xhatdot}{\dot{\xhat}}

\renewcommand{\u}{\vec{u}}

\begin{document}

\maketitle

\begin{abstract}%
In this work, we consider the problem of designing a safety filter for a nonlinear uncertain control system. Our goal is to augment an arbitrary controller with a safety filter such that the overall closed-loop system is guaranteed to stay within a given state constraint set, referred to as being safe. For systems with known dynamics, control barrier functions~(CBFs) provide a scalar condition for determining if a system is safe. For uncertain systems, robust or adaptive CBF certification approaches have been proposed. However, these approaches can be conservative or require the system to have a particular parametric structure. For more generic uncertain systems,   machine learning approaches have been used to approximate the CBF condition. These works typically assume that the learning module is sufficiently trained prior to deployment. Safety during learning is not guaranteed.  We propose a barrier Bayesian linear regression~(BBLR) approach that  guarantees safe online learning of the CBF condition for the true, uncertain system. We assume that the error between the  nominal system and the true system is bounded and exploit the structure of the CBF condition. We show that our approach can safely expand the set of certifiable control inputs despite system and learning uncertainties. The effectiveness of our approach is demonstrated in simulation using a two-dimensional pendulum stabilization task. 
\end{abstract}

\vspace{1em}
\begin{keywords}%
    Safety-critical control, learning control barrier conditions, Bayesian linear regression%
\end{keywords}

\section{Introduction}
Robots are envisioned to perform increasingly complex tasks in applications ranging from autonomous driving to space exploration. In these applications, robots are required to cope with uncertainties in the operating environment, and their safe operation  is critical. In the literature, nonlinear control techniques, such as Lyapunov stability~\citep{khalil2002}, Hamilton-Jacobi reachability analysis~\citep{mitchell2005time}, or control barrier functions (CBFs)~\citep{ames2019a}, provide the mathematical foundation for deriving safety conditions when the system dynamics are known. However, it remains challenging to provide safety guarantees if the robot dynamics are uncertain~\citep{DSL2021}. In this work, we consider the problem of keeping the system inside of a safe state constraint set. We leverage the CBF certification framework and Bayesian linear regression (BLR) to design an adaptive safety filter that renders any given controller safe despite uncertainties present in the system (\autoref{fig:blockdiagram}). The proposed approach exploits the structure of the CBF condition and ensures that an online learned CBF condition is valid by construction.

Inspired by Nagumo's theorem~\citep{Nagumo1942berDL}, the CBF certification framework provides necessary and sufficient safety conditions for a nonlinear control system to stay within a predefined state constraint set~\citep{ames2019a}. In~\citep{Ames2014}, CBFs have been used as safety filters to modify unsafe control inputs computed by an arbitrary controller. However, this framework relies on knowing the dynamics of the system. Any uncertainty in the dynamics results in uncertainty in the CBF certification condition, which can lead to violations of the safety constraints. One possibility to account for these uncertainties is by employing robust~\citep{choi2021robust} or adaptive~\citep{taylor2020c,Lopez2020} CBF conditions to recover safety. While these approaches guarantee safety, robust approaches are typically overly conservative and adaptive approaches often require an \textit{a priori} known structure of the uncertain dynamics, which can be restrictive in practice. 

\begin{figure}[tb]
    \centering
    \includegraphics[scale=.45]{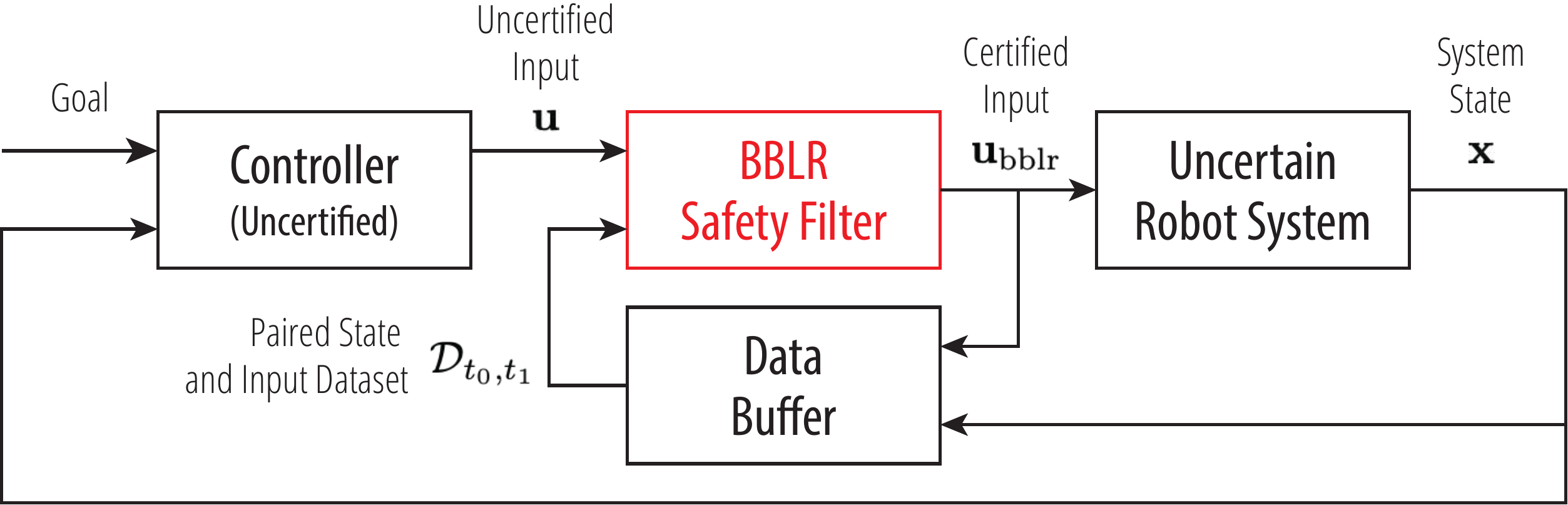}
    \caption
    {
        A block diagram of the proposed barrier Bayesian linear regression (BBLR) safety filter approach, which aims to safely learn an uncertain CBF condition online. Augmenting a given, and possibly unsafe controller, with the BBLR safety filter (in red) guarantees that the state of the uncertain robot system stays within a state constraint set. 
    }
    \label{fig:blockdiagram}
\end{figure}

Recently, learning-based approaches have been developed to address these drawbacks and learn the CBF from data. Compared to  approaches that try to  learn the system dynamics~\citep{DSL2021}, the advantage of CBF approaches is that they  directly learn the error in the CBF condition, which is typically of lower dimension than the full system dynamics~\citep{taylor2019a}. In~\citep{Choi2020, taylor2020a, taylor2020b}, neural network models are used to learn the  CBF conditions offline from pre-collected data. By representing the CBF condition error by a generic neural network, a larger class of uncertain systems can be considered compared to other adaptive approaches. In~\citep{Wang2018a, Ohnishi2019, cheng2019a, Fan2019, khojasteh2020a}, probabilistic learning techniques for CBFs have been proposed to  explicitly account for the uncertainty of the learned CBF condition and guarantee safety of the system probabilistically.

However, current learning-based approaches do not guarantee safety while learning the CBF condition, or only do so by assumption~\citep{taylor2020b}. This is due to the fact that the learned approximations typically do not have a lower bound that can be conveniently incorporated into the CBF safety analysis.
Furthermore, the data collected for training the CBF condition may be subject to errors (e.g., due to numerical differentiation or noise in the system)~\citep{khojasteh2020a} or may be insufficient for  learning the CBF condition well~\citep{taylor2020a, taylor2020b}.
These errors in the learned CBF condition are not explicitly accounted for in current approaches and can result in unsafe behavior.

In this work, we present an online learning framework, called barrier Bayesian linear regression (BBLR), that exploits the structure of the CBF condition and provably guarantees the satisfaction of the CBF condition \textit{during learning}. 
Under the assumption that the error between a given nominal system and the true system is bounded, we prove that our proposed BBLR algorithm can expand the set of safe control inputs while keeping the system  inside of the safe set. Additionally, our Bayesian approach captures the uncertainty of the learned CBF condition, which allows us to incorporate the uncertainty of the learned CBF condition in the overall safety filter framework. We illustrate the improved performance and safety of our approach on a simulated two-dimensional pendulum.

\section{Problem Formulation}
We consider the control architecture in~\autoref{fig:blockdiagram}. The uncertain robot system is modeled as a continuous-time control-affine system with perfect state measurements: 
\begin{equation}
\label{eq:nonlinear_affine_control}
	\xdot = \f(\x) + \g (\x) \u\,,
\end{equation}
where $\x\in \set{X} \subset \R^n$ is the state of the system, $\u \in \mathcal{U} \subset \R^m$ is the control input and $\set{U}$ is compact, and $\f : \R^n \to \R^{n}$ and $\g : \R^n \to \R^{n \times m}$ are locally Lipschitz continuous functions. 
We refer to this system as the true system which is partially unknown. 

We consider a nominal control-affine system given by
\begin{equation}
\label{eq:nominal_system}
	\xhatdot = \fhat(\x) + \ghat (\x) \u \,,
\end{equation}
where $\fhat : \R^n \to \R^{n}$ and $\ghat : \R^n \to \R^{n \times m}$ are known, locally Lipschitz continuous functions. 
Without loss of generality, we can write the true dynamics as
\begin{equation}
    \label{eq:disturbed_system}
        \xdot = \fhat(\x) + \ghat (\x) \u + \vec{d}(\x, \u) \,,
\end{equation}
where $\vec{d}(\x, \u) = \vec{a}(\x) + \vec{b}(\x)^\intercal \u$ with $\vec{a} (\x) =  \f(\x) - \fhat(\x)$ and $\vec{b} (\x) = \g (\x) - \ghat (\x)$. We assume that the difference between the true and nominal system $\vec{d}(\x, \u)$ is bounded for all $(\vec{x},\vec{u})\in \set{X}\times \set{U}$.

Our goal is to augment a given, not certified state-feedback controller 
with a safety filter such that the true system's state $\x$ in~\autoref{eq:nonlinear_affine_control} stays inside a safe set $\set{C}$, see~\autoref{fig:blockdiagram}. The safe set  $\set{C}$ is assumed to be compact and is defined as the $0$-superlevel set of a $C^1$ function $h: \R^n \to \R$:
\begin{equation}
\label{eq:set_c}
    \set{C} = \{\x \in \set{X} \subset \R^n \: \vert \: h(\x) \geq 0 \} \,.
\end{equation}
Its boundary is defined as $\partial\set{C} = \{\x \in \set{X} \subset \R^n \: \vert \: h(\x) = 0 \}$, where for all $\x \in \partial \set{C}$ it holds that $\partial h(\x) / \partial \x  \neq 0$. 
We assume that there exists a control input signal~$\nu: \mathbb{R}_{\ge 0}\mapsto \mathcal{U}$ such that the true system in~\autoref{eq:nonlinear_affine_control} stays inside $\set{C}$. 
We also assume that there exists a control input signal~$\nu$, such that the nominal system in~\autoref{eq:nominal_system} stays inside a subset $\set{C}_{\bar{h}}$ of the safe set $\set{C}$. 

\section{Technical Background}
In this section, we introduce the background for safely learning control barrier function conditions. We begin this section with the definitions of the extended class $\mathcal{K}_{\infty}$ comparison functions and positively control invariant sets. We then give an overview of control barrier functions and Bayesian linear regression.

\subsection{Definitions}
Comparison functions are used to analyze a nonlinear system's stability with Lyapunov functions. CBF conditions rely on comparison functions from the extended class $\set{K}_{\infty}$: 
\begin{definition}[Extended class $\mathcal{K}_\infty$ function]
\label{def:kappa_inf_extended}
	A continuous function $\gamma : \left(-b, a\right) \to \R$, with $\gamma(0) = 0$, $\gamma$ is strictly monotonically increasing, and $a, b = \infty$, $\lim_{r \to \infty} \gamma(r) = \infty$ and $\lim_{r \to - \infty} \gamma(r) = - \infty$, then $\gamma$ is said to belong to $extended~class~\mathcal{K}_\infty$ or equivalently expressed as $\gamma \in \mathcal{K}_{\infty, e}$. 
\end{definition}
In this work, safety and positively control invariance are used synonymously, although, other definitions exist~\citep{DSL2021}. We recall the definition of positively control invariance: 
\begin{definition}[Positively control invariant set] Let $\mathfrak{U}$ be the set of all $\nu : \R_{\geq 0} \to \set{U}\,$. A set ${\set{C}\subset\mathcal{X}}$ is a positively control invariant set for a control system $\xdot = \f(\x, \u)$ with $\f: \mathcal{X} \times \mathcal{U} \to \mathcal{X}$ if~$\: \forall \: {\x_0 \in\set{C}} \,,\: \exists \: \nu \in \mathfrak{U} \,, \: \forall \: t \in T_{\x_0}^+ \,,\: \phi(t, \x_0, \nu) \in \set{C}$, where $\phi(t, \x_0, \nu)$ is the phase flow of $\f(\x,\u)$ starting at $\x_0$ and following $\nu$, and $T_{\x_0}^+$ is the maximum time interval.
\end{definition}

\subsection{Control Barrier Functions}

We aim to find a controller $\vec{k} : \R^n \to \R^m$ for a control-affine system as in~\autoref{eq:nonlinear_affine_control} that renders the set $\set{C}$ positively control invariant. Suppose that $\u = \vec{k}(\x)$ is a state-feedback controller rendering the closed-loop system $\xdot = \f_{\mathrm{cl}}(\x) = \f(\x) + \g(\x) \vec{k}(\x)$
locally Lipschitz continuous on the domain $\set{X}$. Then, for any initial condition $\x_0 \in \set{X}$ there exists a maximum interval of existence~$T_{\x_0}^+$ such that $\x(t)$ is the unique solution to
the closed-loop system 
on~$T_{\x_0}^+$.  
The definition of a control barrier function following \citep{ames2019a} is given next. 
\begin{definition}[Control barrier function (CBF)]
	Let $\set{C} \subseteq \set{X} \subset \R^n$ be the superlevel set of a $C^1$ function $h: \mathcal{D} \to \R$, then $h$ is a control barrier function (CBF) if there exists an extended class $\set{K}_\infty$ function $\gamma$ such that for all $\x \in \set{X}$ the control system in~\autoref{eq:nonlinear_affine_control} satisfies
	\begin{equation}
		\label{eq:cbf_lie_derivative}
		\sup_{\u \in \mathcal{U}} \left[L_\vec{f} h(\x) + L_\vec{g} h(\x) \u \right] \geq - \gamma(h(x)) \,.
	\end{equation}
\end{definition}

Let $\set{U}_{\mathrm{CBF}}(\x) = \{ \u \in \mathcal{U} \: \vert \: L_\vec{f} h(\x) + L_\vec{g} h(\x) \u + \gamma(h(\x)) \geq 0 \}$ be the set of inputs satisfying the CBF condition.
Then, for any $\x \in \set{X}$ any Lipschitz continuous controller $\vec{k}(\x) \in \set{U}_{\mathrm{CBF}}(\x)$ guarantees positive invariance of $\set{C}$ under the dynamics in~\autoref{eq:nonlinear_affine_control}. Determining the positive invariance of a system amounts to checking if the Lie derivative condition in~\autoref{eq:cbf_lie_derivative} is satisfied. This is stated in the following theorem~\citep{Ames2017}: 

\begin{theorem}[CBF positive invariant set certification]
	\label{th:cbf_pi}
	Let $\set{C} \subset \R^n$ be defined as in~\autoref{eq:set_c}
	in terms of a $C^1$ function $h : \set{X} \subset \R^n \to \R$. If $h$ is a control barrier function on $\set{X}$ and for all $\x \in \partial \set{C}$ it holds that $\partial h(\x) / \partial \x \neq 0$, then any Lipschitz continuous controller $\vec{k}(\x) \in \set{U}_{\mathrm{CBF}}(\x)$ for the system in~\autoref{eq:nonlinear_affine_control} renders the set $\set{C}$ positively invariant.
\end{theorem}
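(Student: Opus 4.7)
The plan is to proceed via a scalar comparison argument applied to $h$ evaluated along closed-loop trajectories. First, I would fix an initial condition $\x_0 \in \set{C}$ and let $\vec{k}$ be any Lipschitz continuous selection satisfying $\vec{k}(\x) \in \set{U}_{\mathrm{CBF}}(\x)$ for all $\x \in \set{X}$. Because $\f$, $\g$, and $\vec{k}$ are locally Lipschitz on $\set{X}$, the closed-loop vector field $\f_{\mathrm{cl}}(\x) = \f(\x) + \g(\x)\vec{k}(\x)$ is locally Lipschitz on $\set{X}$, so there exists a unique absolutely continuous solution $\phi(t,\x_0)$ on a maximal forward interval $T_{\x_0}^+$, exactly as discussed in the setup preceding the theorem.

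Next, I would exploit the CBF inequality pointwise along this trajectory. Since $h \in C^1$, the chain rule gives
\begin{equation*}
\dot{h}(\phi(t,\x_0)) = L_{\f} h(\phi(t,\x_0)) + L_{\g} h(\phi(t,\x_0)) \vec{k}(\phi(t,\x_0)) \geq -\gamma(h(\phi(t,\x_0))),
\end{equation*}
for all $t \in T_{\x_0}^+$, where the inequality is precisely the membership condition defining $\set{U}_{\mathrm{CBF}}$. The key step is then a contradiction argument exploiting the sign of $\gamma$. Writing $\eta(t) := h(\phi(t,\x_0))$, we have $\eta(0) \geq 0$ and $\dot{\eta}(t) \geq -\gamma(\eta(t))$. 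Suppose, for contradiction, that there is $t_1 \in T_{\x_0}^+$ with $\eta(t_1) < 0$. By continuity of $\eta$, the time $\tau := \sup\{t \in [0, t_1] \mid \eta(t) \geq 0\}$ satisfies $\eta(\tau) = 0$ and $\eta(s) < 0$ for $s \in (\tau, t_1]$. Since $\gamma \in \set{K}_{\infty, e}$ is strictly monotonically increasing with $\gamma(0) = 0$, we have $\gamma(\eta(s)) < 0$ on $(\tau, t_1]$, so $\dot{\eta}(s) > 0$ there, which contradicts $\eta(t_1) < \eta(\tau) = 0$. Hence $\eta(t) \geq 0$ on $T_{\x_0}^+$, i.e., $\phi(t, \x_0) \in \set{C}$, and since $\x_0$ was arbitrary, $\set{C}$ is positively invariant under the closed-loop dynamics.

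The most delicate point, which I expect to be the main obstacle, is ensuring that the trajectory remains inside the domain $\set{X}$ on which the CBF inequality was hypothesized, so that the bound on $\dot{\eta}$ is available on all of $T_{\x_0}^+$. Since $\set{C} \subseteq \set{X}$, one only needs to rule out the trajectory escaping $\set{X}$ through some part of its boundary that lies outside $\set{C}$; a careful execution would argue by a maximality/open-closed argument on $\{t \in T_{\x_0}^+ : \phi([0,t],\x_0) \subset \set{C}\}$. The regularity hypothesis $\partial h(\x)/\partial \x \neq 0$ on $\partial \set{C}$ plays a structural role here: it keeps $\partial \set{C}$ a smooth codimension-one manifold, ruling out degenerate tangencies and making the Nagumo-style tangent-cone reading of $\dot{h} \geq 0$ on $\partial \set{C}$ valid, which is the geometric content underlying the comparison step.
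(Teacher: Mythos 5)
Your argument is correct, and it is worth noting that the paper does not actually prove this statement: Theorem~\ref{th:cbf_pi} is imported from \cite{Ames2017}, and the text only offers a one-sentence Nagumo-style intuition (the Lie derivative of $h$ is nonnegative on $\partial\set{C}$, so trajectories cannot cross the boundary). Your scalar argument along trajectories is essentially the standard proof from the CBF literature, and your contradiction step is even slightly more elementary than the usual route: \cite{Ames2017} invokes the comparison lemma for $\dot y=-\gamma(y)$, which needs $\gamma$ locally Lipschitz for uniqueness, whereas your argument only uses continuity, strict monotonicity, and $\gamma(0)=0$ to conclude $\dot\eta(s)\geq-\gamma(\eta(s))>0$ whenever $\eta(s)<0$, so $\eta$ cannot become negative after touching zero (make the final step explicit via the mean value theorem or $\eta(t_1)-\eta(\tau)=\int_\tau^{t_1}\dot\eta\,ds\geq 0$). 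Two small corrections to your closing discussion. First, the worry about the trajectory leaving $\set{X}$ is moot under the paper's conventions: the maximal interval $T_{\x_0}^+$ is taken relative to the domain $\set{X}$ on which $\f_{\mathrm{cl}}$ is locally Lipschitz, and positive invariance is only asserted for $t\in T_{\x_0}^+$, so the CBF inequality is available along the entire trajectory wherever the solution is defined; no separate open-closed argument is needed. Second, the regularity condition $\partial h(\x)/\partial\x\neq 0$ on $\partial\set{C}$ plays no role in your comparison step and is not what validates it -- in the original result it serves to make $\partial\set{C}$ well defined as the zero level set (and to support the companion attractivity/asymptotic-stability statement), so attributing the ``geometric content'' of the comparison argument to it is misplaced, though harmless to the proof itself.
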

 
Intuitively, any controller $\vec{k}(\x) \in \set{U}_{\mathrm{CBF}}$ guarantees that the Lie derivative of $h$ is nonnegative on the boundary of $\set{C}$. Since $h$ is only nonnegative on $\set{C}$, the set is positively invariant under the assumption of locally Lipschitz closed-loop dynamics.  

Any arbitrary given feedback controller might not satisfy the condition on the Lie derivative from~\autoref{eq:cbf_lie_derivative} of the control barrier function $h$. \cite{freeman_kototovic, Ames2014} introduce a point-wise min-norm controller that minimally modifies the given controller to guarantee positive control invariance. As the CBF condition on positive invariance 
is affine in the control input $\u$, we can define a quadratic program (QP) to find the minimum adjustment to a given controller $\vec{k}(\x)$ that renders the controlled system positively invariant:
\begin{subequations}
	\label{eqn:cbf_qp}
	\begin{align}
	\u^*(\x) = \underset{\u \in \set{U}}{\text{argmin}} & \quad \frac{1}{2} \lVert \u - \vec{k}(\x) \rVert_2^2 \label{eqn:cbf_qp_cost} \\ \text{s.t.} & \quad L_\vec{f} h(\x) + L_\vec{g} h(\x) \u \geq - \gamma(h(\x))\,. \label{eqn:cbf_constraint}
	\end{align}
\end{subequations}
Intuitively, the optimized $\u^*(\x)$ guarantees that the set $\set{C}$ is positively invariant and stays as close as possible to the original controller $\vec{k}(\x)$, where closeness is specified with respect to a chosen distance measure (e.g., the Euclidean norm in~\autoref{eqn:cbf_qp_cost}). Using this approach, we can apply any controller to the system, and the safety filter in \autoref{eqn:cbf_qp} guarantees positive invariance of the system on the safe set $\set{C}$.

Consider the true system as a disturbed nominal system as in~\autoref{eq:disturbed_system},
where $\vec{d}(\x, \u) \in \mathbb{D} \subset \R^n$ is the model mismatch between the nominal and the true system. The Lie derivative of the disturbed dynamics is given by
\begin{equation}
    \label{eq:projected_disturbed_dynamics}
    \frac{\partial h(\x)}{\partial \x} \xdot = L_{\fhat} h(\x) + L_{\ghat} h(\x) \u + \delta (\x, \u) = L_{\fhat_{\mathrm{cl}}} h(\x) + \delta(\x, \u) \,,
\end{equation}
where $\delta (\x, \u) = \partial h(\x) / \partial \x \: \vec{d}(\vec{x}, \vec{u}) \in \R$ is the error in the Lie derivative. We assume that $\lvert \delta(\x, \u) \rvert \leq \rho$ for all $(\vec{x}, \vec{u}) \in \set{X} \times \set{U}$. For a state-feedback controller $\u=\vec{k}(\x)$, the Lie derivative of the disturbed closed-loop system is given by $L_{\f_{\mathrm{cl}}} h(\x) := L_{\fhat_{\mathrm{cl}}} h(\x) + \delta(\x, \u)$. The notion of projection-to-state safety (PSSf) determines safety for uncertain systems. We use a similar definition compared to~\cite{taylor2020b}; however, we directly use the control barrier function candidate $h(\x)$ as the projection:
\begin{definition}[Projection-to-state safety~\citep{taylor2020b}]
The disturbed control system in \\\autoref{eq:disturbed_system} is projection-to-state safe (PSSf) on $\set{C}_{\bar{h}} = \{\x \in \set{X} \subset \R^n \: \vert \: \bar{h}(\x) \geq 0 \}$ if there exists $\rho > 0$
such that the safe set $\set{C} \supset \set{C}_{\bar{h}}$ is positively control invariant for all $\delta(\x, \u)$ satisfying $\lvert \delta(\x, \u) \rvert \leq \rho$ for all $(\vec{x}, \vec{u}) \in \set{X} \times \set{U}$,
where $\bar{h}(\x) = h(\x) - \rho$ with the control barrier function $h$ as the projection and $\rho$ being the bound of the projected disturbances $\delta$ in~\autoref{eq:projected_disturbed_dynamics}. 
\end{definition}
Under the assumption that the set $\set{C}_{\bar{h}}$ is non-empty, we have the following lemma:
\begin{lemma}
\label{cor:pssf}
Let $\set{C}_{\bar{h}} \subset \R^n$ be the 0-superlevel set of a $C^1$ function $\bar{h} : \R^n \to \R$ and for all $\x \in \partial \set{C}_{\bar{h}}$ it holds that $\partial \bar{h}(\x) / \partial \x \neq 0$. If there exists a Lipschitz continuous state-feedback controller $\vec{k} : \R^n \to \R^m$ such that $L_\vec{f} \bar{h}(\x) + L_\vec{g} \bar{h}(\x) \vec{k}(\x) \geq - \gamma (\bar{h}(\x))$,
and there exists $\rho > 0$ such that the projected disturbances $\delta(\x, \vec{k}(\x))$ in~\autoref{eq:projected_disturbed_dynamics} satisfy $\lvert \delta(\x, \vec{k}(\x)) \rvert \leq \rho$ for all $(\x, \vec{k}(\x)) \in \set{X} \times \set{U}$ and $\rho \leq - \gamma (- \rho )$, then there exists $\gamma_{PSSf} \in \set{K}_{\infty, e}$  that renders the disturbed system in~\autoref{eq:disturbed_system} PSSf on $\set{C}_{\bar{h}}$.  
\end{lemma}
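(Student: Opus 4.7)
The plan is to exhibit an explicit comparison function $\gamma_{PSSf} \in \set{K}_{\infty,e}$ for which $h$ serves as a CBF on $\set{C}$ along the true closed-loop dynamics, and then apply \autoref{th:cbf_pi} to conclude that $\set{C}$ is positively control invariant.

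First, I would exploit the identity $\bar h(\x) = h(\x) - \rho$: the gradients coincide, so $L_{\vec{v}} \bar h = L_{\vec{v}} h$ for any vector field $\vec{v}$. Along the true trajectory driven by $\vec{k}$, the decomposition in \autoref{eq:projected_disturbed_dynamics} gives
\[
    \dot h(\x) = L_{\fhat_{\mathrm{cl}}} h(\x) + \delta(\x,\vec{k}(\x))
    = L_{\fhat} \bar h(\x) + L_{\ghat}\bar h(\x)\vec{k}(\x) + \delta(\x,\vec{k}(\x)).
\]
Combining the hypothesized CBF inequality for $\bar h$ with the disturbance bound $\delta(\x,\vec{k}(\x)) \geq -\rho$ then yields
\[
    \dot h(\x) \;\geq\; -\gamma(\bar h(\x)) - \rho \;=\; -\gamma(h(\x)-\rho) - \rho.
\]

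Second, I would define $\gamma_{PSSf}(s) := \gamma(s-\rho) - \gamma(-\rho)$ and check the properties required of an extended class $\set{K}_\infty$ function: continuity and strict monotonicity are inherited directly from $\gamma$, the normalization $\gamma_{PSSf}(0) = 0$ holds by construction, and the unbounded limits at $\pm\infty$ follow from those of $\gamma$. Invoking the scalar hypothesis $\rho \leq -\gamma(-\rho)$, equivalently $-\rho \geq \gamma(-\rho)$, the previous lower bound on $\dot h$ can be rewritten as
\[
    \dot h(\x) \;\geq\; -\gamma(h(\x)-\rho) + \gamma(-\rho) \;=\; -\gamma_{PSSf}(h(\x)),
\]
which is exactly a CBF inequality for $h$ on $\set{X}$ with comparison function $\gamma_{PSSf}$ holding along the true disturbed closed-loop dynamics, uniformly in all admissible $\delta$ with $\lvert \delta \rvert \leq \rho$.

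Finally, I would invoke \autoref{th:cbf_pi} applied to $h$, $\set{C}$, and $\gamma_{PSSf}$: the non-vanishing gradient on $\partial \set{C}$ is inherited from the standing CBF hypothesis on $h$, the closed-loop dynamics are locally Lipschitz since $\f$, $\g$, and $\vec{k}$ all are, and the displayed inequality places $\vec{k}(\x)$ inside the shifted CBF admissible set for the true system. This gives positive invariance of $\set{C}$ under the true dynamics for every admissible $\delta$, which is precisely PSSf on $\set{C}_{\bar h}$. The main subtle point I expect is the design of $\gamma_{PSSf}$ itself: the hypothesis $\rho \leq -\gamma(-\rho)$ is used precisely to absorb the additive disturbance into the constant shift $-\gamma(-\rho)$, and this only works because $\gamma \in \set{K}_{\infty,e}$ guarantees $\gamma(-\rho) < 0$ with the correct sign.
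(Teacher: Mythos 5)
Your proposal is correct and follows essentially the same route as the paper's own proof: you construct the identical comparison function $\gamma_{PSSf}(s) = \gamma(s-\rho) - \gamma(-\rho)$, use the disturbance bound $\delta \geq -\rho$ together with $\rho \leq -\gamma(-\rho)$ in the same chain of inequalities, and conclude positive invariance of $\set{C}$ (hence PSSf on $\set{C}_{\bar h}$). The only difference is cosmetic: you verify the extended class $\set{K}_\infty$ properties of $\gamma_{PSSf}$ and the appeal to \autoref{th:cbf_pi} more explicitly than the paper does.
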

\begin{proof}
Let $\gamma_{PSSf}(h(\x)) = \gamma (h(\x) - \rho ) - \gamma (- \rho)$, then $\gamma_{PSSf} \in \set{K}_{\infty, e}$ since $\gamma_{PSSf}(0) = 0$ and 
\begin{subequations}
    \begin{align}
        L_{\f_{\mathrm{cl}}} (\bar{h}(\x) + \rho ) 
        &= L_{\fhat_{\mathrm{cl}}} \bar{h}(\x) + \delta(\x, \vec{k}(\x)) \\
        &\geq L_{\fhat_{\mathrm{cl}}} \bar{h}(\x) - \rho \label{eq:proof_LHS}\\
        &\geq - \gamma( \bar{h}(\x) ) - \rho \\
         &\stackrel{\mathrm{using}~\rho \leq - \gamma (- \rho )}{\geq}  - \gamma( \bar{h}(\x) ) + \gamma (- \rho) \\
        &= - \gamma_{PSSf} (\bar{h} (\x) + \rho ) \label{eq:proof_RHS} \,.
    \end{align}
\end{subequations}
Then, the disturbed system in~\autoref{eq:disturbed_system} is positively control invariant on $\set{C}$ and PSSf on $\set{C}_{\bar{h}}$. 
\end{proof}
In the case that $\rho > - \gamma (- \rho )$, only the greater 0-superlevel set of $h(\x) + h_0$ can be rendered positively invariant instead of $\set{C}$, where $h_0 = \gamma^{-1}(-\rho) +\rho $.

\subsection{Bayesian Linear Regression}
In this section, we briefly describe model learning using Bayesian linear regression (BLR), see \cite{Bishop07} for details. Our goal is to learn the Lie derivative residual as a linear combination of basis functions with additive Gaussian noise using a set of data points. 
Given a dataset $\mathcal{D} = \left\{ \vec{X}, \vec{y} \right\}$ of independent identically distributed data points, where the input set is $\vec{X} = \left(\x_1, \dots, \x_N \right)$ with $\x_i \in \R^n$ and $N > 0$ and the target set is $\vec{y} = \left(y_1, \dots, y_N \right)$ with $y_i \in \R$.
We assume that the Lie derivative residual can be modeled as $\vec{w}^\intercal \phi(\x)$,
where $\vec{w} \in \R^M$ with $M > 0$ are the weights and $\phi(\x) = \begin{pmatrix} \phi_1(\x), \dots, \phi_M(\x) \end{pmatrix}^\intercal$ defines the set of basis functions. 
We want to determine the weights such that the BLR model closely represents the dataset $\set{D}$.
We assume a Gaussian prior distriubtion over the weights $\vec{w}$ as $p(\vec{w}) = \set{N} (\mu_0, \Sigma_0)$,
with mean $\mu_0$ and covariance $\Sigma_0$.
Let $\Phi \in \R^{N \times M}$ be the design matrix with entries $\phi_{i, j} = \phi_j(\x_i)$ and $\sigma^2$ is the noise variance. 
Then, the posterior distribution is given by $    p(\vec{w} | \set{D} ) = \set{N} (\mu_N, \Sigma_N) \,,
$ where $\mu_N = \Sigma_N \left(\Sigma_0^{-1} \mu_0 + 1 /\sigma^2 \Phi^\intercal \vec{y} \right)$ and 
$\Sigma_N^{-1} = \Sigma_0^{-1} + 1/ \sigma^2 \Phi^\intercal \Phi \,.$
For sequentially arriving data, the previous posterior can be used as the prior for the following data point. 
The predictive distribution is given by $p ( y | \x, \set{D}) = \set{N} (\mu_{\mathrm{pred}}, \sigma^2_{\mathrm{pred}}) \,,
$ where $\mu_{\mathrm{pred}} = \mu_N^\intercal \phi(\x)$ and $
        \sigma^2_{\mathrm{pred}} = \phi(\x)^\intercal \Sigma_N \phi(\x) + 1 / \sigma^2$.

\section{Safely Learning Control Barrier Conditions}
In this section, we present our barrier Bayesian linear regression (BBLR) approach that safely learns uncertain control barrier conditions online.
By construction, we guarantee that \textit{(i)} the Lie derivative residual learned by the BBLR is $0$ on the boundary of the safe set $\partial \set{C}$ and \textit{(ii)} there exists an appropriate extended class $\set{K}_{\infty}$ function to lower bound the Lie derivative in the CBF condition. 

\subsection{Exploiting Control Barrier Condition Structure for Safe Online Learning}

The model mismatch between the true and nominal systems directly impacts the Lie derivative in the CBF condition. 
The Lie derivative residual can be written as $\delta (\x, \u) = \alpha (\x) + \beta (\x)^\intercal \u$, where $\alpha (\x) = \partial h(\x) / \partial \x \: \vec{a}(\x)$ and $\beta (\x) = \partial h(\x) / \partial \x \: \vec{b}(\x)$.
We aim to safely learn an estimate $\hat{\delta}(\x, \u) = \hat{\alpha} (\x) + \hat{\beta} (\x)^\intercal \u$ of the Lie derivative residual $\delta(\x, \u)$ from online collected data.
Our proposed online learned safety filter augments an arbitrary Lipschitz continuous controller $\vec{k}(\x)$ using a QP (similar to~\autoref{eqn:cbf_qp}) to guarantee that the true system stays inside the safe set $\set{C}$, while also expanding the set of safe control inputs $\set{U}_{CBF}(\x)$ for the true system. 
We  exploit the structure of the CBF condition by choosing the set of basis functions as compositions of design functions $\phi_j$ and the control barrier function $h$. By construction, this will guarantee that the learned Lie derivative residual will have a value greater than or equal to zero on the boundary $\partial \set{C}$. In contrast, arbitrary function approximators typically do not satisfy this condition.

Since $h(\x)$ is $C^1$ and $\set{C}$ is compact, 
$h$ attains a minimum $ h_{\mathrm{min}} = 0$ and maximum $h_{\mathrm{max}} = \max_{\x \in \set{C}} h(\x)$ over $\set{C}$, 
and the codomain of $h : \set{C} \to \set{H}$ with $\set{H} = \left[ 0, h_{\mathrm{max}} \right]$ is compact. 
Let $\set{F}$ be the set of functions that are lower bounded by a negative class $\set{K}_{\infty, e}$ function on $\set{H}$ with $\set{F} = \left\{ \phi : \set{H} \to \R \:\vert\: \exists \gamma \in \set{K}_{\infty, e},~ \forall r \in \set{H},~ \phi(r) \geq - \gamma(r) \right\}.$ Then, we have the following results:

\begin{lemma}
\label{lem:class_K_lower_bound}
    The set $\set{F}$ is closed under addition. 
\end{lemma}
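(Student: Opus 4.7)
The plan is to take two arbitrary elements $\phi_1, \phi_2 \in \mathcal{F}$ and exhibit a single extended class $\mathcal{K}_\infty$ function that lower bounds $-(\phi_1+\phi_2)$. The natural candidate is the pointwise sum of the two witnesses supplied by membership in $\mathcal{F}$.

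Concretely, I would unfold the definition: by assumption there exist $\gamma_1, \gamma_2 \in \mathcal{K}_{\infty,e}$ with $\phi_i(r) \geq -\gamma_i(r)$ for every $r \in \mathcal{H}$. Adding the two inequalities gives
\begin{equation*}
    (\phi_1 + \phi_2)(r) \;\geq\; -\bigl(\gamma_1(r) + \gamma_2(r)\bigr) \qquad \forall r \in \mathcal{H}.
\end{equation*}
It then remains to check that $\gamma := \gamma_1 + \gamma_2$ satisfies Definition~\ref{def:kappa_inf_extended}, i.e., lies in $\mathcal{K}_{\infty,e}$. Continuity is preserved under addition; $\gamma(0) = \gamma_1(0) + \gamma_2(0) = 0$; strict monotonicity is preserved because the sum of two strictly increasing functions on $\mathbb{R}$ is strictly increasing; and the limits behave correctly since $\lim_{r\to\infty}\gamma(r) = \infty + \infty = \infty$ and analogously $\lim_{r\to-\infty}\gamma(r) = -\infty$. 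Hence $\gamma \in \mathcal{K}_{\infty,e}$ is a valid witness for $\phi_1+\phi_2$, establishing $\phi_1+\phi_2 \in \mathcal{F}$.

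I do not anticipate a real obstacle here; the argument is essentially a bookkeeping check that the defining properties of $\mathcal{K}_{\infty,e}$ are stable under addition. The only subtlety worth flagging explicitly is that $\mathcal{K}_{\infty,e}$ demands unbounded limits in both directions (not just nonnegativity on $[0,\infty)$ as with standard class $\mathcal{K}_\infty$), so the verification of the $r\to-\infty$ limit should be stated rather than glossed over. Everything else is immediate from the linearity of inequalities and of the sum.
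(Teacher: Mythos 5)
Your proof is correct and follows the same route as the paper: sum the two lower-bound inequalities and take the sum of the witnesses $\gamma_1+\gamma_2$ as the new extended class $\mathcal{K}_\infty$ bound. The only difference is that the paper cites \cite{Kellett2014} for the fact that a finite sum of $\mathcal{K}_{\infty,e}$ functions is again $\mathcal{K}_{\infty,e}$, whereas you verify this closure property directly from Definition~\ref{def:kappa_inf_extended}, which is a harmless (and self-contained) substitute.
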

\begin{proof}
    Let $\phi_j \in \set{F}$ for $j \in \{1, \dots, N \}$ be a finite collection of functions. 
    By definition there exist $\gamma_j$ such that for all $j \in \{1, \dots, N \},$  $\phi_j(r) \geq - \gamma_j(r)$. This implies that $\sum_{j = 1}^N \phi_j(r) \geq - \sum_{j = 1}^N \gamma_j(r) =: - \gamma(r)$, where we used the result from~\cite{Kellett2014}, that a finite sum of class $\set{K}_{\infty, e}$ functions is also of class $\set{K}_{\infty, e}$.  
\end{proof}

\begin{lemma}
\label{lem:learned_cbf_condition}
    The CBF condition for the estimated Lie derivative  $L_{\fhat_{\mathrm{cl}}} \bar{h}(\x) + \hat{\alpha} (\x) + \hat{\beta} (\x)^\intercal \u \geq - \gamma(h(\x))$ is satisfied, if there exist $\gamma_0, \gamma_{\delta} \in \set{K}_{\infty, e}$ such that $L_{\fhat_{\mathrm{cl}}} \bar{h}(\x) \geq - \gamma_0(h(\x))$ and $\hat{\alpha} (\x) + \hat{\beta} (\x)^\intercal \u \geq - \gamma_{\delta}(h(\x))$. 
\end{lemma}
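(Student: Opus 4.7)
The plan is to prove the lemma by directly summing the two given lower bounds and then applying Lemma \ref{lem:class_K_lower_bound} (or more precisely, the fact about sums of extended class $\set{K}_\infty$ functions cited in its proof) to package the resulting sum as a single negative class $\set{K}_{\infty,e}$ function. Since the CBF condition is affine in both sides and the two hypotheses already share the common argument $h(\x)$, very little needs to be done beyond a one-line addition.

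First I would assume the two hypotheses hold: there exist $\gamma_0,\gamma_\delta\in\set{K}_{\infty,e}$ with
\begin{equation*}
L_{\fhat_{\mathrm{cl}}}\bar{h}(\x) \geq -\gamma_0(h(\x)),\qquad
\hat{\alpha}(\x)+\hat{\beta}(\x)^\intercal \u \geq -\gamma_\delta(h(\x)).
\end{equation*}
Next I would add these two inequalities pointwise in $(\x,\u)\in\set{X}\times\set{U}$ to obtain
\begin{equation*}
L_{\fhat_{\mathrm{cl}}}\bar{h}(\x)+\hat{\alpha}(\x)+\hat{\beta}(\x)^\intercal \u \;\geq\; -\bigl(\gamma_0(h(\x))+\gamma_\delta(h(\x))\bigr).
\end{equation*}

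Then I would define $\gamma(r):=\gamma_0(r)+\gamma_\delta(r)$ and invoke the result from \cite{Kellett2014}, already used inside the proof of Lemma \ref{lem:class_K_lower_bound}, that a finite sum of extended class $\set{K}_\infty$ functions is again in $\set{K}_{\infty,e}$. This gives $\gamma\in\set{K}_{\infty,e}$ and yields exactly the claimed inequality
\begin{equation*}
L_{\fhat_{\mathrm{cl}}}\bar{h}(\x)+\hat{\alpha}(\x)+\hat{\beta}(\x)^\intercal \u \;\geq\; -\gamma(h(\x)),
\end{equation*}
which is the CBF condition for the estimated Lie derivative.

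There is not really a hard step here: the lemma is essentially a corollary of Lemma \ref{lem:class_K_lower_bound} specialised to the two summands $L_{\fhat_{\mathrm{cl}}}\bar{h}(\x)$ and $\hat{\alpha}(\x)+\hat{\beta}(\x)^\intercal \u$. The only subtlety worth mentioning explicitly is that the argument $h(\x)$ (and not $\bar{h}(\x)$) appears on the right-hand side of the CBF condition, so I would make sure the two hypotheses are both stated in terms of $h(\x)$ so that the additive combination produces $-\gamma(h(\x))$ directly, matching the form of the target inequality without needing any shift by $\rho$.
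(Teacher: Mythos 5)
Your proposal is correct and matches the paper's own proof: both simply add the two hypothesized lower bounds and invoke the result of \cite{Kellett2014} that a finite sum of class $\set{K}_{\infty,e}$ functions is again in $\set{K}_{\infty,e}$, defining $\gamma := \gamma_0 + \gamma_\delta$. Your extra remark about keeping both bounds in terms of $h(\x)$ is a fair observation but does not change the argument.
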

\begin{proof}
    Using the result from~\citep{Kellett2014} on the summation of a finite number of class $\mathcal{K}_{\infty, e}$ functions, we have $L_{\fhat_{\mathrm{cl}}} \bar{h}(\x) + \hat{\alpha} (\x) + \hat{\beta} (\x)^\intercal \u \geq -  \gamma_0(h(\x)) - \gamma_{\delta}(h(\x)) =: - \gamma(h(\x))$. Therefore, the CBF condition in~\autoref{eq:cbf_lie_derivative} is satisfied. 
\end{proof}
The main result of our paper combines the previous Lemmas to guarantee safe control barrier conditions using linear combinations of basis functions $\phi_j \circ h$. The idea is to guarantee that any learned Lie derivative residual $\hat{\delta}$ is lower bounded by an extended class $\set{K}_\infty$ function. 
\begin{theorem}
\label{theo:safety_condition}
Let $L_{\fhat_{\mathrm{cl}}} \bar{h}(\x) + \hat{\alpha} (\x) + \hat{\beta} (\x)^\intercal \u$ be the estimate of the true Lie derivative $ L_{\f_{\mathrm{cl}}} \bar{h}(\x)$ and $\u \in \set{U}$, where $\set{U}$ is compact, and $\hat{\alpha}$ and $\hat{\beta}$ are defined as 
\begin{equation}
\label{eq:alpha_beta_choices}
    \hat{\alpha}(\x) = \sum_{j = 1}^N \lambda_j \phi_j (h (\x))\,,~ \hat{\beta}(\x) = \begin{pmatrix} \sum_{j = 1}^N \mu_{1, j} \phi_j (h (\x)) & \dots & \sum_{j = 1}^N \mu_{m, j} \phi_j (h (\x))\end{pmatrix}^\intercal \,
\end{equation}         
with nonnegative weights $\lambda_j, \mu_{i, j} \in \R_{\geq 0}$, $\phi_j \in \set{F}$. 
If there exists a $\gamma_0 \in \set{K}_{\infty, e}$ such that $L_{\fhat_{\mathrm{cl}}} h(\x) \geq - \gamma_0(h(\x))$, 
then $L_{\fhat_{\mathrm{cl}}} \bar{h}(\x) + \hat{\alpha} (\x) + \hat{\beta} (\x)^\intercal \u \geq - \gamma(h(\x))$ with $\gamma\in \set{K}_{\infty, e}$.
\end{theorem}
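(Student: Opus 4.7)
My plan is to reduce the claim to Lemma~\ref{lem:learned_cbf_condition} by isolating the nominal Lie-derivative contribution and the learned residual contribution, and to apply Lemma~\ref{lem:class_K_lower_bound} repeatedly to assemble the needed $\set{K}_{\infty, e}$ bounds.

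First I would dispose of the nominal piece: because $\bar{h}(\x) = h(\x) - \rho$ with $\rho$ constant, $L_{\fhat_{\mathrm{cl}}}\bar{h}(\x) = L_{\fhat_{\mathrm{cl}}} h(\x)$, and the hypothesis yields $L_{\fhat_{\mathrm{cl}}}\bar{h}(\x) \geq -\gamma_0(h(\x))$ at once. By Lemma~\ref{lem:learned_cbf_condition} it then remains to produce a $\gamma_\delta \in \set{K}_{\infty, e}$ with $\hat{\alpha}(\x) + \hat{\beta}(\x)^\intercal \u \geq -\gamma_\delta(h(\x))$. For $\hat{\alpha}(\x) = \sum_j \lambda_j \phi_j(h(\x))$, each $\phi_j \in \set{F}$ admits a $\tilde{\gamma}_j \in \set{K}_{\infty, e}$ with $\phi_j(r) \geq -\tilde{\gamma}_j(r)$, and since $\lambda_j \geq 0$ the scaled inequality $\lambda_j \phi_j(h(\x)) \geq -\lambda_j \tilde{\gamma}_j(h(\x))$ stays in the same class; Lemma~\ref{lem:class_K_lower_bound} then sums the $N$ pieces to $\hat{\alpha}(\x) \geq -\gamma_\alpha(h(\x))$ for some $\gamma_\alpha \in \set{K}_{\infty, e}$. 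For $\hat{\beta}(\x)^\intercal \u = \sum_{i,j} u_i \mu_{i, j} \phi_j(h(\x))$ I would use the compactness of $\set{U}$ to bound $|u_i|$ uniformly and then re-run the $\set{F}$-closure argument on each component $\hat{\beta}_i$, combining it with an upper estimate on $\phi_j$ available on the compact codomain $\set{H}$, to reach $\hat{\beta}(\x)^\intercal \u \geq -\gamma_\beta(h(\x))$ for some $\gamma_\beta \in \set{K}_{\infty, e}$. Adding $\gamma_\alpha$ and $\gamma_\beta$ via Lemma~\ref{lem:class_K_lower_bound} produces the desired $\gamma_\delta$, and Lemma~\ref{lem:learned_cbf_condition} finally assembles $\gamma_0$ and $\gamma_\delta$ into $\gamma \in \set{K}_{\infty, e}$.

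The hard part, I expect, is the cross term $\hat{\beta}(\x)^\intercal \u$. Unlike $\hat{\alpha}$, whose bound is a clean nonnegative combination of $\set{F}$ members, each $u_i$ may change sign, so the one-sided bound supplied by $\phi_j \in \set{F}$ does not transfer verbatim: when $u_i < 0$ it is the upper values of $\phi_j$ that determine the lower bound of $u_i \phi_j(h(\x))$. Handling this forces me to pair the $\set{F}$ lower bound on $\phi_j$ with an upper estimate that is itself dominated by a $\set{K}_{\infty, e}$ function of $h(\x)$, and to exploit $\u \in \set{U}$ compact for a uniform magnitude bound. The reason this can succeed at all---and the reason the composition $\phi_j \circ h$, rather than an arbitrary feature map, is essential---is that every bound we derive is expressed as a function of $h(\x)$, which automatically inherits the vanishing-at-the-boundary structure required by $\set{K}_{\infty, e}$ and ties the magnitude of the learned residual to how far inside $\set{C}$ we sit.
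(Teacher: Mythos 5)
Your proposal follows the same route as the paper's proof: observe that $L_{\fhat_{\mathrm{cl}}}\bar{h}(\x)=L_{\fhat_{\mathrm{cl}}}h(\x)$, reduce the claim to Lemma~\ref{lem:learned_cbf_condition}, bound $\hat{\alpha}$ using the nonnegative weights $\lambda_j$ together with Lemma~\ref{lem:class_K_lower_bound}, and absorb the input magnitudes via compactness of $\mathcal{U}$. The paper does exactly this in a single chain of inequalities, defining $\gamma_{\delta}(h(\x)) := \sum_{j}\lambda_j\gamma_j(h(\x)) + \sum_{i,j}\mu_{i,j}\lvert u_i\rvert\gamma_j(h(\x))$ (stated for $\x\in\mathcal{C}_{\bar{h}}$), so in structure you and the paper agree.

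Where you diverge is the cross term, and there your diagnosis is sharper than your cure. You are right that for $u_i<0$ the one-sided bound $\phi_j(r)\geq-\gamma_j(r)$ does not by itself give $u_i\phi_j(h(\x))\geq-\lvert u_i\rvert\gamma_j(h(\x))$; the paper's proof simply writes this inequality, implicitly also using $\phi_j(r)\leq\gamma_j(r)$. But your proposed fix, an upper estimate on $\phi_j$ obtained from the compact codomain $\mathcal{H}$, does not close the gap: membership in $\mathcal{F}$ gives no upper bound at all (no continuity is assumed), and even for continuous $\phi_j$ compactness only yields a constant $M_j$, so the resulting lower bound $-\lvert u_i\rvert M_j$ does not vanish as $h(\x)\to 0$ and hence cannot be written as $-\gamma(h(\x))$ with $\gamma\in\mathcal{K}_{\infty,e}$ precisely near $\partial\mathcal{C}$, which is where the CBF condition matters. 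What is actually needed is the second half of the sentence you wrote yourself: an upper estimate on $\phi_j$ that is itself dominated by a class $\mathcal{K}_{\infty,e}$ function of $h$, i.e., $-\phi_j\in\mathcal{F}$ in addition to $\phi_j\in\mathcal{F}$. This is not a consequence of the theorem's hypotheses but an extra assumption, and it is exactly the condition the paper later imposes in Corollary~\ref{cor:positive_weights} to handle sign-indefinite coefficients (there the weights, here the inputs $u_i$). Alternatively one can restrict to nonnegative inputs, or follow the paper's proof in restricting the inequality to $\x\in\mathcal{C}_{\bar{h}}$, where $h(\x)\geq\rho>0$ so that a constant can be dominated by a suitably steep $\mathcal{K}_{\infty,e}$ function, at the price of losing the bound on all of $\mathcal{C}$. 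So: same approach as the paper, with a real subtlety correctly flagged but not correctly resolved---a subtlety the paper's own proof passes over silently.
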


\begin{proof}
From Lemma~\ref{lem:learned_cbf_condition}, since $L_{\fhat_{\mathrm{cl}}} h(\x) \geq - \gamma_0(h(\x))$ is satisfied by assumption, we need to show that $\hat{\alpha} (\x) + \hat{\beta} (\x)^\intercal \u \geq - \gamma_{\delta}(h(\x))$ to satisfy the CBF condition. \autoref{eq:alpha_beta_choices} for all $\x \in \set{C}_{\bar{h}}$ yields:
\begin{subequations}
    \begin{align}
        \hat{\alpha} (\x) + \hat{\beta} (\x)^\intercal \u &= \sum_{j = 1}^N \lambda_j \phi_j (h (\x)) + \sum_{i = 1}^m \sum_{j = 1}^N \mu_{i, j} u_i \phi_j (h (\x)) \\
        &\stackrel{\mathrm{using}~\phi_j \in \set{F}}{\geq} - \sum_{j = 1}^N \lambda_j \gamma_j (h (\x)) - \sum_{i = 1}^m \sum_{j = 1}^N \mu_{i, j} \lvert u_i \rvert \gamma_j (h (\x)) \\
         &\stackrel{\mathrm{using~Lemma}~\ref{lem:class_K_lower_bound}}{=:} - \gamma_{\delta}(h(\x)) \,,
    \end{align}
\end{subequations}
where $\gamma_{\delta} \in \set{K}_{\infty, e}$.
\end{proof}
\autoref{theo:safety_condition} shows that any learned Lie derivative residual estimate that is a sum of nonnegatively weighted function compositions $\phi_j \circ h$ with $\phi_j \in \set{F}$ is lower bounded by a negative class $\mathcal{K}_{\infty, e}$ function. 
This guarantees that the sum of the nominal Lie derivative $L_{\fhat_{\mathrm{cl}}} h(\x)$ and the learned Lie derivative residual $(\hat{\alpha} (\x) + \hat{\beta} (\x)^\intercal \u)$ satisfies the CBF condition as well.

\subsection{Locally Learning Control Barrier Conditions with Online Bayesian Linear Regression}

A common assumption in previous work is that the learned Lie derivative satisfies the CBF condition. 
Additionally, the learned Lie derivative did not account for the uncertainty in the numerical Lie derivative estimation. 
We explicitly guarantee the satisfaction of the CBF condition, by leveraging the structure of our presented Lie derivative residual from the previous section in an online framework. We consider the uncertainty in the numerical Lie derivative estimates through Bayesian linear regression, which yields the uncertainty in our predictions.

Since the true system is unknown, we cannot determine the Lie derivative of the true system. Therefore, we estimate the Lie derivative residual locally from recent measurements summarized in a dataset $\mathcal{D}_{t_0, t_1} = \left\{ \vec{X}_{t_0, t_1}, \vec{U}_{t_0, t_1}, \vec{H}_{t_0, t_1} \right\}$, where $\vec{X}_{t_0, t_1} = \left(\x_{t_0}, \dots, \x_{t}, \dots, \x_{t_1} \right)$ with $\x_t = \x(t \Delta_t)$, $\vec{U}_{t_0, t_1} = \left(\u_{t_0}, \dots, \u_{t_1} \right)$, and $\vec{H}_{t_0, t_1} = \left(h(\x_{t_0}), \dots, h(\x_{t_1}) \right)$ where $t_1 \geq t \geq t_0 \geq 0$, $\Delta_t > 0$ is the sampling time, 
$t_1 = k - 1$, where $k$ is the current discretized time step, $t_0 = t_1 - T$, 
and $T$ is the number of past time steps we consider. Then, using the forward difference method we can determine an estimate of the true Lie derivative: $ L_{f_{\mathrm{cl}}} h (\x) \approx (h(\x_{t +1}) - h(\x_t)) / \Delta_t$.
\begin{remark}
The Lie derivative estimation error can be bounded following~\citep{nocedal2017numerical} with $c \Delta_t / 2 := \sigma_{\mathrm{diff}}$,
where $c = \sup_{\x \in \set{C}} d^2 h(\x) / d t^2$.
In practice, the second or any higher time derivative of $h$ cannot be determined. Instead, $\Delta_t$ needs to be chosen sufficiently small and $c$ needs to be chosen conservatively to account for the numerical differentiation errors. 
\end{remark}
Our goal is to use BLR to determine the weights for the basis functions from these local datasets. 
We assume a Gaussian prior over the weights for BLR. This violates the condition of nonnegative weights from~\autoref{theo:safety_condition}. To guarantee that every basis function still satisfies the conditions, we reduce the set of possible basis functions to functions for which $\phi_j \in \set{F}$ and $- \phi_j \in \set{F}$ on the nonnegative domain $\set{H}$, where $\set{F}$ is the class of functions that are lower bounded by negative class $\mathcal{K}_{\infty, e}$ functions. This restriction requires for all $\x \in \set{C}\,,$ $\phi_j(h(\x)) = 0$. This yields safety, however, this excludes positive Lie derivative residuals on the boundary. The class of locally Lipschitz functions on the set $\set{H}$ for which $\phi_j(0) = 0$ satisfies this restriction. The following corollary guarantees safety on safe set $\set{C}$ with the more restricted set of basis functions:
\begin{corollary}
\label{cor:positive_weights}
    Let $\hat{\delta}(\x, \u) = \vec{w}^\intercal \Phi(\x, \u)$, with $\vec{w} \in \R^{N (m + 1)}$ and 
    $\Phi(\x, \u) = \begin{pmatrix} \Phi_{\x}(\x)^\intercal \,,  \Phi_{\u}(\x, \u)^\intercal \end{pmatrix}^\intercal$,
    where $\Phi_{\x}(\x) = \begin{pmatrix} \phi_1  (h(\x)),  \dots,  \phi_N (h(\x)) \end{pmatrix}^\intercal$ and $\Phi_{\u}(\x, \u) = \begin{pmatrix} u_1 \Phi_{\x}(\x)^\intercal \dots  u_m \Phi_{\x}(\x)^\intercal \end{pmatrix}^\intercal$
    and $\phi_j$ is such that $\phi_j \in \set{F}$ and $- \phi_j \in \set{F}$. 
    Then, $L_{\fhat_{\mathrm{cl}}} \bar{h}(\x) + \hat{\alpha} (\x) + \hat{\beta} (\x)^\intercal \u \geq - \gamma(h(\x))$
    is satisfied, if there exists $\gamma_0 \in \set{K}_{\infty, e}$ such that $L_{\fhat_{\mathrm{cl}}} \bar{h}(\x) \geq - \gamma_0(h(\x))$.
\end{corollary}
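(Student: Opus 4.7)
\textbf{Proof proposal for Corollary \ref{cor:positive_weights}.} The plan is to reduce this statement to Theorem~\ref{theo:safety_condition} via Lemma~\ref{lem:learned_cbf_condition}. The only gap between the two results is that the Gaussian prior used in BLR puts no sign constraint on $\vec{w}$, whereas Theorem~\ref{theo:safety_condition} required nonnegative weights. I will argue that the stricter basis-function condition ``$\phi_j \in \set{F}$ \emph{and} $-\phi_j \in \set{F}$'' exactly compensates for allowing sign-unconstrained weights, provided that $\set{U}$ is compact.

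\emph{Step 1 (two-sided envelope).} The double membership $\phi_j, -\phi_j \in \set{F}$ furnishes $\gamma_j^+, \gamma_j^- \in \set{K}_{\infty, e}$ with $-\gamma_j^-(r) \leq \phi_j(r) \leq \gamma_j^+(r)$ for all $r \in \set{H}$. Setting $\gamma_j^* = \gamma_j^+ + \gamma_j^-$, which remains in $\set{K}_{\infty, e}$ by the finite-sum fact cited from \cite{Kellett2014}, I obtain the symmetric bound $|\phi_j(h(\x))| \leq \gamma_j^*(h(\x))$ on $\set{C}$; in particular $\phi_j(0)=0$, matching the boundary condition flagged just before the corollary.

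\emph{Step 2 (absorbing the signs).} Indexing the weights as $w_{0,j}$ for the $\hat{\alpha}$-block and $w_{i,j}$ for the block that multiplies $u_i$, I expand $\hat{\alpha}(\x) + \hat{\beta}(\x)^\intercal \u$ as a double sum, lower-bound each term using Step~1 together with $|u_i| \leq U_{\max} := \max_{\u \in \set{U}} \|\u\|_\infty < \infty$ (finite because $\set{U}$ is compact), and regroup the coefficients into $c_j = |w_{0,j}| + U_{\max}\sum_{i=1}^m |w_{i,j}| \geq 0$. This yields a lower bound of the form $-\sum_{j=1}^N c_j \gamma_j^*(h(\x))$, and Lemma~\ref{lem:class_K_lower_bound} (closure of $\set{F}$ under addition, trivially extended to nonnegative scalar multiples, since a positive multiple of a $\set{K}_{\infty,e}$ function is again $\set{K}_{\infty,e}$) packages this as $-\gamma_\delta(h(\x))$ for a single $\gamma_\delta \in \set{K}_{\infty, e}$. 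Adding the hypothesis $L_{\fhat_{\mathrm{cl}}} \bar{h}(\x) \geq -\gamma_0(h(\x))$ and invoking Lemma~\ref{lem:learned_cbf_condition} closes out the proof with $\gamma := \gamma_0 + \gamma_\delta$.

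The main difficulty is not analytical but notational: organizing the double sum so that the compactness of $\set{U}$ and the symmetric bound on $\phi_j$ combine cleanly into a single nonnegatively weighted sum of $\gamma_j^*$'s. Once that bookkeeping is done, the corollary is a direct specialization of Theorem~\ref{theo:safety_condition}, with the role of the nonnegative weights $\lambda_j, \mu_{i,j}$ played by $|w_{0,j}|$ and $U_{\max}|w_{i,j}|$. It is worth commenting in the proof that the symmetric envelope forces $\phi_j(0)=0$, so the learned residual necessarily vanishes on $\partial \set{C}$; this is what buys safety at the price of being unable to represent strictly positive residuals on the boundary.
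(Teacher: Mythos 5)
Your proposal is correct and follows essentially the same route as the paper's own proof: bound $\lvert \phi_j(h(\x))\rvert$ by a class $\set{K}_{\infty,e}$ envelope obtained from $\phi_j,-\phi_j\in\set{F}$, absorb the sign-unconstrained weights into their absolute values (with $\u$ handled via the $\infty$-norm over the compact $\set{U}$), and then conclude via Lemma~\ref{lem:class_K_lower_bound} and Lemma~\ref{lem:learned_cbf_condition} as in Theorem~\ref{theo:safety_condition}. Your version is in fact slightly more explicit than the paper's (constant $U_{\max}$ instead of the state-dependent $\lVert\u\rVert_\infty$, and the explicit construction $\gamma_j^* = \gamma_j^+ + \gamma_j^-$), but the substance is identical.
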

\begin{proof}
     The estimated Lie derivative residual $\hat{\delta}(\x, \u)$ satisfies
\begin{subequations}
    \begin{align}
         \hat{\delta}(\x, \u) & = \sum_{j = 1}^{N} w_j \phi_j (h (\x)) + \sum_{i = 1}^m \sum_{j = 1}^N w_{iN + j} u_i \phi_j (h (\x)) \,,  \\
        & \geq - \sum_{j = 1}^{N} \lvert w_j \rvert \gamma_j (h (\x)) - \sum_{i = 1}^m \sum_{j = 1}^N \lvert w_{iN + j} \rvert \lVert \u \rVert_{\infty} \gamma_j (h (\x)) \,,  \\
        & =: - \gamma_{\hat{\delta}} (h(\x)) \,.
    \end{align}
\end{subequations}
The rest of the proof follows similar arguments as the proof for~\autoref{theo:safety_condition}.
\end{proof}
The restricted set of basis functions satisfies the safe application of BLR, which yields the predicted mean $\mu_k$ and covariance $\Sigma_k$. The estimated Lie derivative residual is $\hat{\delta}_k(\x, \u) = \mu_k^\intercal \Phi(\x, \u)$ and the covariance of the Lie derivative residual is given by $\sigma_{k}(\x, \u) = \Phi(\x, \u)^\intercal \Sigma_k \Phi(\x, \u) + \frac{1}{\sigma_{\mathrm{diff}}^2}$. 
We assume that the resulting probabilistic model is well calibrated, such that there exists $s \in \N$ so that $\delta(\x, \u) \in \hat{\delta}_k(\x, \u) \pm s \sigma_k(\x, \u)$ with high probability. 
Then, we can use our Lie derivative estimate to render an arbitrary Lipschitz continuous controller $\vec{k}(\x)$ safe using the following QP:     
\begin{align}
	\raisetag{30pt}
	\label{eqn:learning_cbf_qp}
	\begin{split}
	\u_{\mathrm{bblr}}^*(\x) = \underset{\u \in \set{U}}{\text{argmin}} & \quad \frac{1}{2} \lVert \u - \vec{k}(\x) \rVert_2^2 \\ \text{s.t.} & \quad L_{\fhat} \bar{h}(\x) + L_{\ghat} \bar{h}(\x) \u + \hat{\delta}(\x, \u) - \rho \geq - \gamma_{PSSf} (\bar{h} (\x) + \rho ) - \gamma_{\hat{\delta}} (h(\x))\,, 
    \end{split}
\end{align}
where the CBF condition in~\autoref{eqn:learning_cbf_qp} follows by combining the results from Corollary~\ref{cor:positive_weights}, and Lemma~\ref{cor:pssf} with~\autoref{eq:proof_LHS} for the left hand side, and~\autoref{eq:proof_RHS} for the right hand side of the constraint. 
Finally, we account for the uncertainty between the nominal Lie derivative and the true Lie derivative, which is captured by the constant $\rho$, and the uncertainty between the learned Lie derivative and true Lie derivative, which is captured by $\sigma_{k}(\x, \u)$. 
Depending on these uncertainties, we blend the safe control inputs from a nominal filter $\u^*_{\mathrm{nominal}}(\x)$, which is a QP filter subject to the CBF condition as in~Lemma~\ref{cor:pssf}, and our proposed filter, which uses $\u^*_{\mathrm{bblr}}(\x)$ from~\autoref{eqn:learning_cbf_qp}. This yields $\u^* = (1 - r) \u^*_{\mathrm{nominal}}(\x) + r \u^*_{\mathrm{bblr}}(\x)$,
where $r = q \left(\frac{\rho}{\rho + s \sigma_{k}(\x, \u)} \right)$ with a monotonically increasing function $q$ bounded by $1$, and $s \in \N$, which determines the number of considered standard deviations. 
In contrast to previous work~\citep{taylor2020b}, which blended between unsafe and safe control inputs during learning, our approach blends two safe control inputs and guarantees that the system will not leave $\set{C}$ during learning. 
When the uncertainty of the learned Lie derivative residual is larger than $\rho$, the system returns to the smaller safe set $\set{C}_{\bar{h}}$ instead, due to the set's attractiveness~\citep{ames2019a}, until the uncertainty is reduced again.

\section{Simulation Results}
\label{sec:experiments}
We show the performance of our proposed BBLR on the example of an inverted pendulum:
\begin{equation}
    \dot{x}_1  = x_2\,,~
		\dot{x}_2  = - g / l \:\sin (x_1) + 1/ (m l^2) \:u \,,
\end{equation}
where $\begin{pmatrix} x_1, x_2\end{pmatrix}^\intercal= \begin{pmatrix} \theta & \dot{\theta} \end{pmatrix}^\intercal$ with $\theta$ being the angle of the pendulum from the downward position and $\dot{\theta}$ being the angular velocity, $g$ is the gravitational constant, $l$ is the length and $m$ is the mass of the pendulum, and $u$ is the input torque. The parameters of the true and nominal systems are $(l=1.0~\text{m}, m=1.0~\text{kg})$ and $(l=1.0~\text{m}, m=0.96~\text{kg})$, respectively. We conservatively estimate $\rho = 0.5$.
We are given the CBF $h(\theta, \dot{\theta})  = 1 - ( \theta/\theta_{\mathrm{max}} )^2 - ( \dot{\theta}/\dot{\theta}_{\mathrm{max}} )^2 $, where $\theta_{\mathrm{max}} = \pi~\text{rad}$ and $\dot{\theta}_{\mathrm{max}} = \SI{5.0}{\radian/\sec}$, which yields an ellipsoidal safe set that prevents the pendulum from swinging beyond the upright position. 
We design a controller based on the nominal system with the goal of swinging up the pendulum (see~\autoref{fig:plots}~(a)): $u = (-18.8\: (\theta - \pi) - 6.1\:\dot{\theta}  +5)~\text{Nm}$. 
For BBLR we consider the set of monomials from degree 1 to 5 composited with $h(\theta,  \dot{\theta})$ as the basis and learn from the last $\SI{0.2}{\second}$ of data. For the first $\SI{0.2}{\second}$, we use the nominal filter and afterwards we apply the blended safe inputs as described in the previous section. 

\autoref{fig:plots}~(b) shows the successful application of the learned CBF filter using BBLR. The unfiltered controller, leaves the safe set, while the filtered controllers keep the system inside the safe set. Our learned safety filter is able to keep the system at the boundary of the safe set, whereas the nominal filter conservatively keeps the system inside a conservative nominal safe set. Finally, \autoref{fig:plots}~(c) demonstrates the satisfaction of the CBF condition through BBLR as the Lie derivative evaluated for the true system reduces to $0$ at the boundary. In the case of unfiltered control inputs, the system achieves a  negative Lie derivative on the boundary resulting in the violation of safety. The nominal filter restricts the  Lie derivative to 0 at the boundary of the more conservative nominal safe set. 

\begin{figure}
    \centering
    \subfigure[]
    {\raisebox{8mm}{
        \includegraphics[scale=.2]{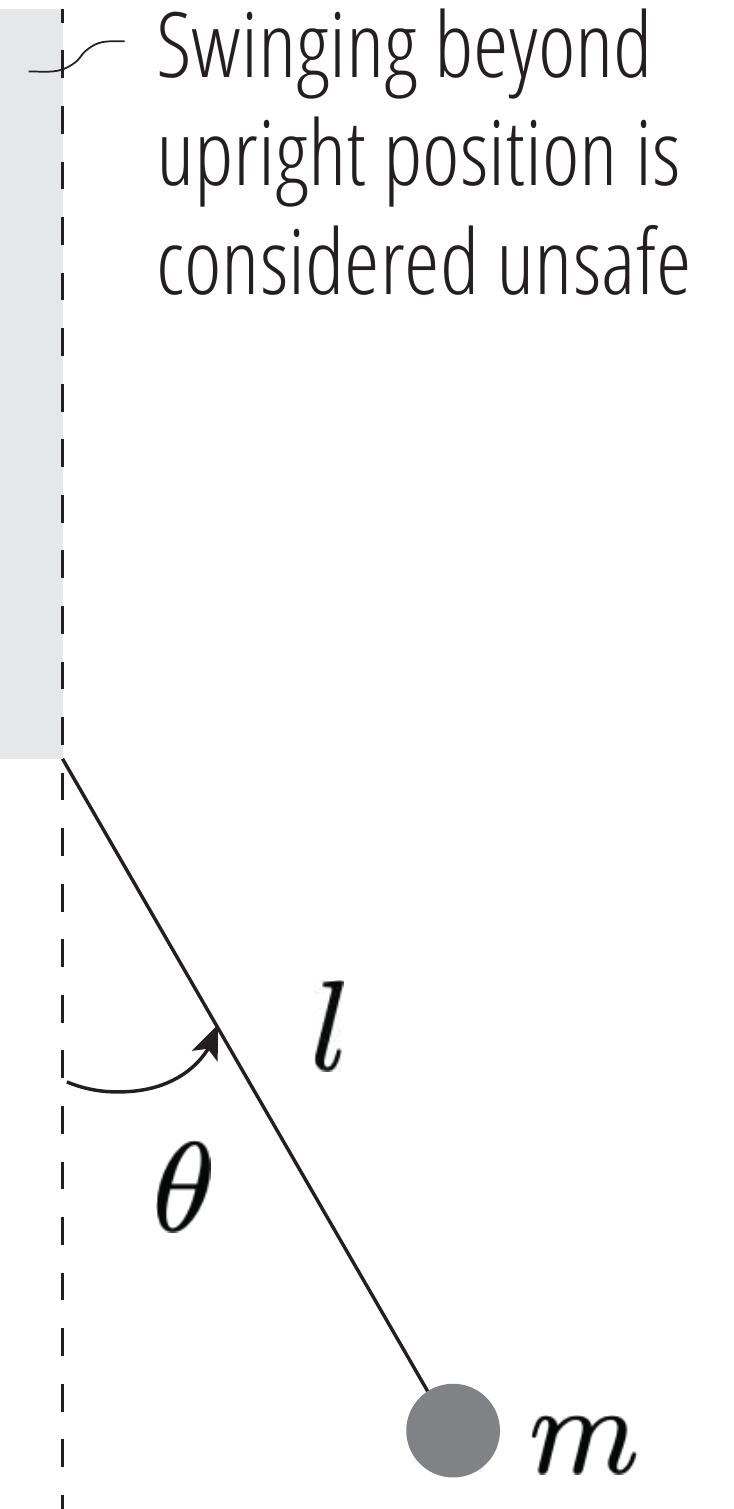}
        }
    }
    ~
    \subfigure[]
    {
        \includegraphics[scale=.4, trim={0.0cm 0cm 1.6cm 1.3cm},clip]{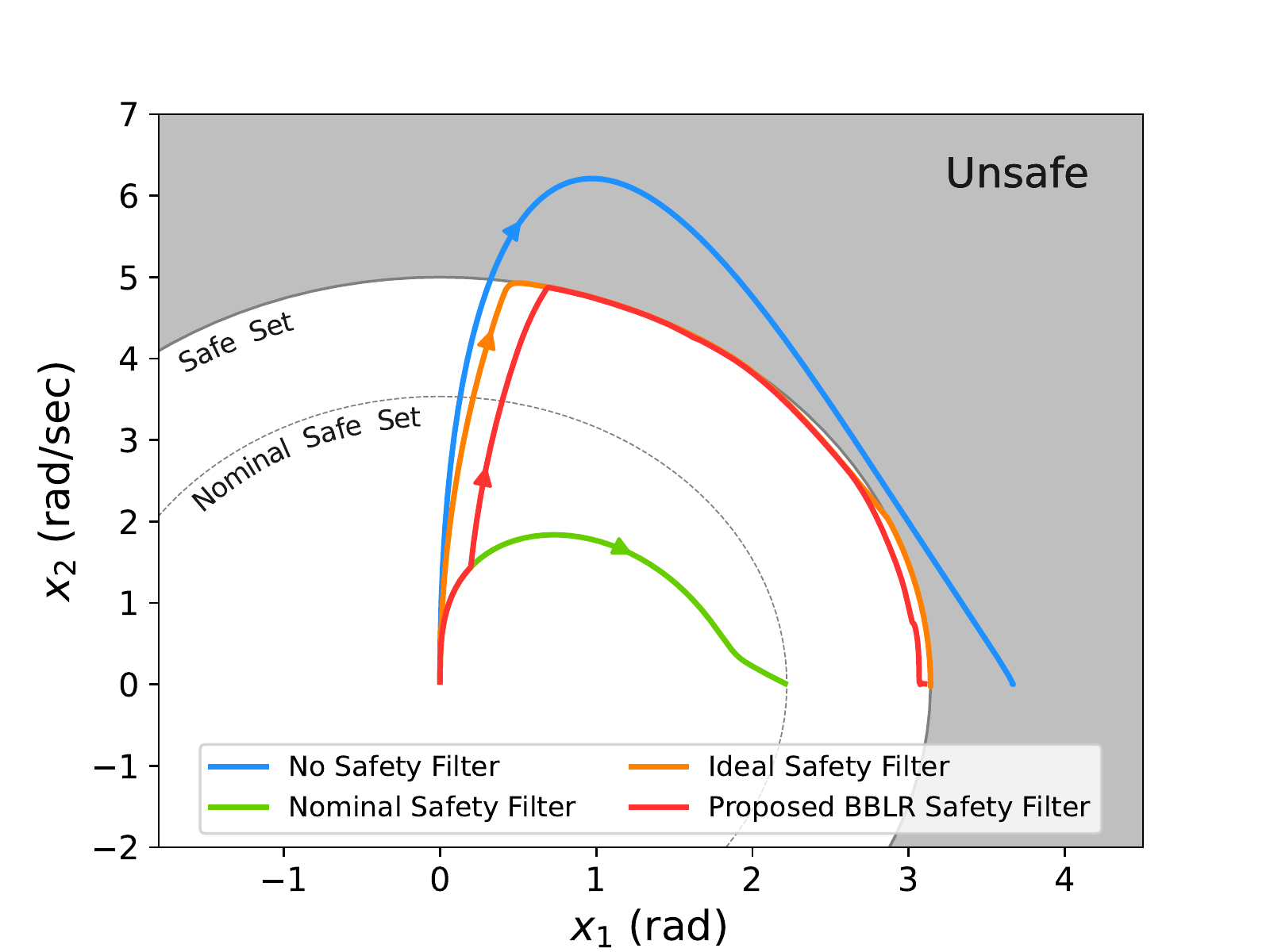}
    }~
    \subfigure[]
    {
        \includegraphics[scale=.4, trim={0.0cm 0cm 1.6cm 1.3cm},clip]{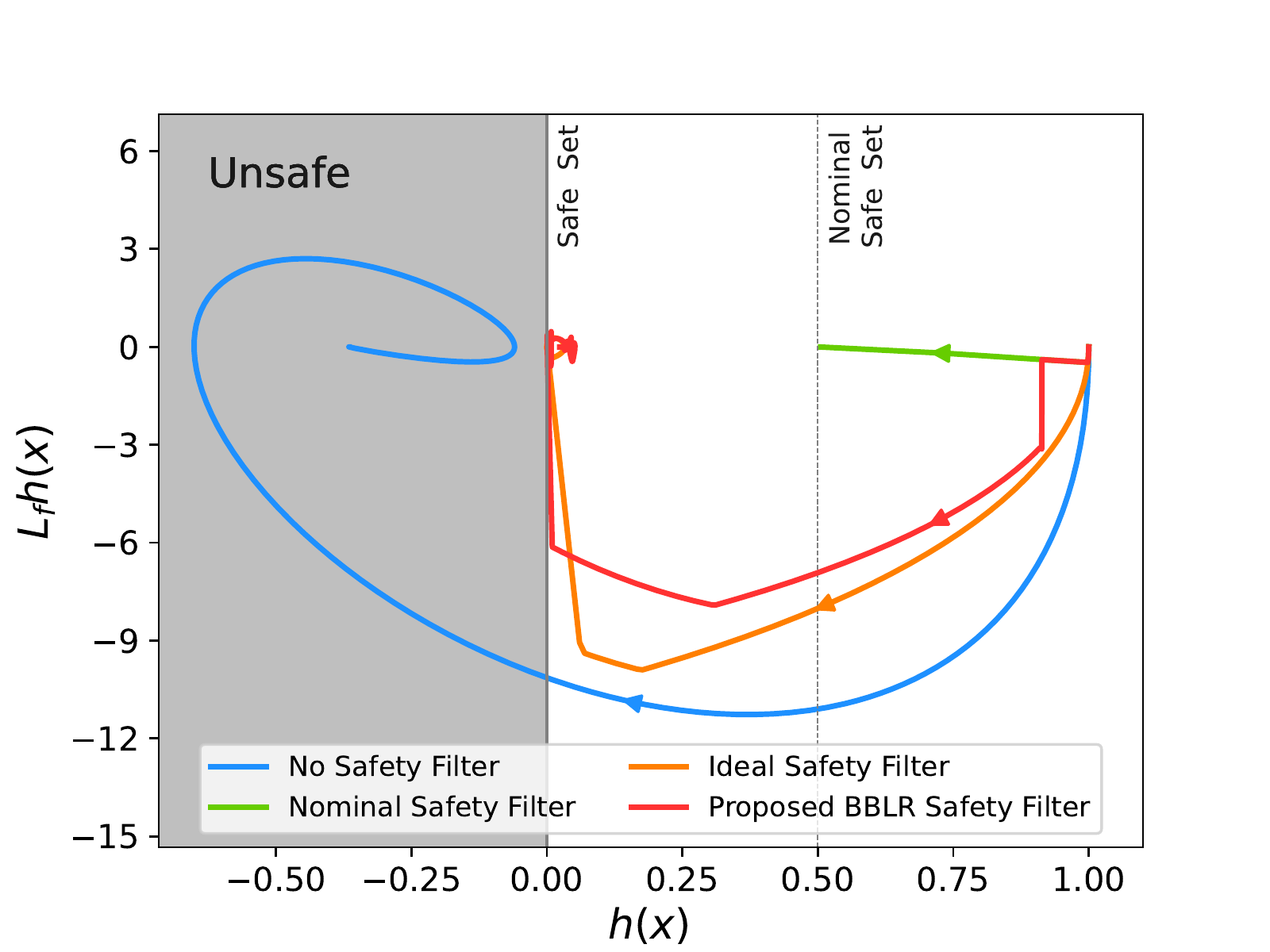}
    }
    \caption
    {
        (a) The schematic of the pendulum;
        (b) the state trajectories demonstrate that our proposed BBLR safe filter achieves similar performance as compared to the filter using the true CBF condition, while also guaranteeing safety of the safe set; 
        (c) the Lie derivative over the CBF evaluated for the true system confirms that our learned Lie derivative is guaranteed to be 0 on the boundary of the safe set. 
    }
    \label{fig:plots}
\end{figure}

\section{Conclusions}
We proposed the BBLR approach for safely learning control barrier conditions online. Our approach can be applied to partially unknown nonlinear control-affine systems with bounded uncertainty. By leveraging a special set of basis functions that are lower-bounded by negative class $\mathcal{K}_{\infty, e}$ functions, we were able to guarantee the satisfaction of the CBF condition while learning the Lie derivative residual using BLR. We exploit the predictive uncertainty provided by the BLR to systematically account for the errors in the online learning scheme. We demonstrated the improved performance and safety of our approach on a simulated two-dimensional pendulum. 

\bibliography{references}

\end{document}